\documentclass[11pt]{article}

\usepackage[margin=1in]{geometry}
\usepackage{amsmath, amsthm, amssymb, mathrsfs}
\usepackage{color}
\usepackage{natbib}
\usepackage{algorithm}
\usepackage{algorithmic}
\usepackage{hyperref}

\newtheorem{theorem}{Theorem}[section]
\newtheorem{lemma}[theorem]{Lemma}

\newtheorem{assumption}{Assumption}
\newtheorem{remark}{Remark}

\newcommand{\R}{\mathbb{R}}
\newcommand{\E}{\mathbb{E}}
\newcommand{\Pro}{\mathbb{P}}
\newcommand{\X}{\mathbf{X}}
\newcommand{\x}{\mathbf{x}}
\newcommand{\PsiVec}{\mathbf{\Psi}}
\newcommand{\thetaVec}{\mathbf{\theta}}
\newcommand{\I}{\mathbb{I}}

\newcommand{\B}{\mathcal{B}_1}
\newcommand{\inner}[2]{\langle #1, #2 \rangle}

\allowdisplaybreaks

\title{Online Quantile Regression  for Nonparametric \\ Additive Models }
\author{Haoran Zhan \thanks{haoran.zhan@u.nus.edu}\\
    Department of Statistics and Data Science, \\
  Southwestern University of Finance and Economics}
\date{April 10, 2026}

\begin{document}

\maketitle

\begin{abstract}
This paper introduces a projected functional gradient descent algorithm (P-FGD) for training nonparametric additive quantile regression models in  online settings. This algorithm extends the functional stochastic gradient descent  framework to the pinball loss. An advantage of P-FGD is that it does not need to store historical data while maintaining $O(J_t\ln J_t)$ computational complexity per step where $J_t$ denotes the number of basis functions. Besides, we only need $O(J_t)$ computational time for quantile function prediction at time $t$. These properties show that P-FGD is much better than the commonly used RKHS in online learning. By leveraging a novel Hilbert space projection identity, we also prove that the proposed online quantile function estimator (P-FGD) achieves the minimax optimal consistency rate $O(t^{-\frac{2s}{2s+1}})$ where   $t$ is the current time and $s$ denotes the  smoothness degree of the quantile function. Extensions to mini-batch learning are also established.\\

\noindent \textbf{Keywords:} online quantile regression,  gradient descent, nonparametric additive models, optimal minimax rate.
\end{abstract}

\section{Introduction}
Extensive literature has focused on online least squares estimation for the conditional mean \citep{zhang2022sieve} but real-world data frequently exhibit heteroscedasticity and heavy-tailed noise. Quantile regression \citep{koenker1978regression} offers a robust alternative by estimating the conditional $\tau$-quantile function $q_\tau(\x)$, defined such that 
$$
q_\tau(\x):= \inf_{v\in\R}\{v: F_{Y|\X=\x}(v)\ge \tau\},
$$
where $(\X,Y)\in[0,1]^p\times\R$ is a pair of random vectors and  $F_{Y|\X=\x}$ denotes the conditional distribution function. On the other hand, the target function $q_\tau$ minimizes the expected pinball loss:
\begin{equation}
    \min_{q} \E \left[ \rho_\tau(Y - q(\X)) \right], \quad \text{where } \rho_\tau(u) = u(\tau - \I(u \le 0)).
\end{equation}
Therefore, the loss function $\rho_\tau$ is frequently used in literature to estimate $q_\tau$.

Until now, online quantile regression has not gained much attention in literature and  current papers only consider  linear models, see \cite{shen2025online}.  In this paper, we are interested in studying nonparametric additive models. Nonparametric additive models \citep{hastie1990generalized} provide a flexible alternative to linear models, mitigating the curse of dimensionality while retaining structural interpretability. Unlike  many previous online learning papers, here we aim to construct an online estimator satisfying  an important property that 
\begin{center}
{\textbf{historical data can not be visited during its construction}.} 
\end{center}
This property ensures that our online algorithm still works for big data; otherwise, machines do not have space to store such big data and also do not have abilities to do  numerical analysis related to such estimator. Thus, RKHS is not suitable in this case (e.g. \cite{zhang2022sieve} ) and gradient descent  method is instead a wise choice. 

However, adapting  Gradient Descent (GD) to nonparametric quantile regression presents severe theoretical hurdles. Unlike the squared loss, the pinball loss is not globally strongly convex; its curvature depends entirely on the unknown conditional density $p_{Y|\X}$. If the estimator diverges, the density evaluations approach zero, neutralizing the descent direction. In this paper, we propose a projected Functional GD (P-FGD) estimator that overcomes this bottleneck.

\subsection{Contributions}
\begin{itemize}
    \item \textbf{Algorithmic Innovation:} We propose a functionally motivated online quantile estimator called P-FGD. To make the estimator be well-defined, we apply an exact $\ell_1$-ball projection to the basis coefficients. This requires only $O(J_t\ln J_t)$ time, preserving the computational efficiency of standard F-GD. Meanwhile, our method \textbf{does not} need storing any historical data.
    \item \textbf{Theoretical Resolution:} We introduce a novel Hilbert space orthogonal decomposition to analyze the expected cross-term generated by the non-linear subgradient. We establish that our estimator achieves the minimax optimal rate $O(t^{-2s/(2s+1)})$ for any smooth function with smoothness degree $s$, where $t$ denotes the current time.
    \item \textbf{Mini-Batch Extension:} We prove that our algorithm seamlessly handles mini-batch updates, maintaining the minimax optimal rate with respect to the total cumulative sample size.
\end{itemize}

\section{Preliminaries and Assumptions}
\subsection{Function Spaces}\label{sec:basis}

Let $L_2$ represent the collection of univariate square-integrable functions. For notational convenience, we omit the explicit domain initially, though we will subsequently restrict our focus to the unit interval $[0,1]$. A sequence of functions $\{\psi_j\}_{j=1}^\infty \subset L_2$ forms an orthonormal basis with respect to the Lebesgue measure provided that
\begin{equation*}
    \int \psi_i(x)\psi_j(x)dx = \delta_{ij},
\end{equation*}
where $\delta_{ij}$ denotes the Kronecker delta. Furthermore, we define this basis to be:
\begin{enumerate}
    \item[(i)] \textit{centered} if 
    \begin{equation*}
        \int \psi_j(x)dx = 0, \quad \text{for all } j \ge 1;
    \end{equation*}
    \item[(ii)] \textit{complete} if every function $f \in L_2$ can be uniquely expressed as
    \begin{equation*}
        f = \sum_{j=1}^\infty \theta_{f,j}\psi_j,
    \end{equation*}
    for some sequence of coefficients $\{\theta_{f,j}\}_{j=1}^\infty \in \ell^2$ (the space of square-summable sequences).
\end{enumerate}

A standard example of a complete orthonormal basis on $L_2[0,1]$ is the trigonometric basis, given by $\psi_0(x) = 1$, $\psi_{2k-1}(x) = \sqrt{2}\sin(2\pi k x)$, and $\psi_{2k}(x) = \sqrt{2}\cos(2\pi k x)$ for $k \ge 1$. Dropping the constant term $\psi_0$ leaves a centered orthonormal basis. Wavelet bases and Legendre bases are other common choices in the literature \citep{tsybakov2009introduction}.

For a given, though not strictly complete, orthonormal basis $\{\psi_j\}$, let $\mathcal{F}_1(\{\psi_j\})$ denote the class of univariate functions that can be expanded in this basis. Formally,
\begin{equation*}
    \mathcal{F}_1(\{\psi_j\}) := \left\{ f = \sum_{j=1}^\infty \theta_{f,j}\psi_j : \sum_{j=1}^\infty \theta_{f,j}^2 < \infty \right\}.
\end{equation*}
If the basis is complete, $\mathcal{F}_1(\{\psi_j\})$ trivially coincides with $\mathcal{L}^2$. We next introduce the univariate Sobolev space $\mathcal{H}_1(s, \{\psi_j\})$ associated with a smoothness parameter $s$:
\begin{equation*}
    \mathcal{H}_1(s, \{\psi_j\}) := \left\{ f = \sum_{j=1}^\infty \theta_{f,j}\psi_j : \sum_{j=1}^\infty (j^s \theta_{f,j})^2 < \infty \right\}.
\end{equation*}
Similarly, the Sobolev ellipsoid of radius $Q$ is defined as:
\begin{equation*}
    \mathcal{W}_1(s, Q, \{\psi_j\}) := \left\{ f = \sum_{j=1}^\infty \theta_{f,j}\psi_j : \sum_{j=1}^\infty (j^s \theta_{f,j})^2 < Q^2 \right\}.
\end{equation*}

Clearly, $\mathcal{W}_1(s, Q, \{\psi_j\}) \subset \mathcal{H}_1(s, \{\psi_j\})$. For any $f \in \mathcal{H}_1(s, \{\psi_j\})$, its Sobolev norm is given by $\|f\|_s^2 := \sum_{j=1}^\infty (j^s \theta_{f,j})^2$. The parameter $s > 0$ dictates the smoothness level of the functions. For the theoretical derivations in this paper, we treat $s$ as a known constant. Such Sobolev classes are foundational in  nonparametric regression analysis \citep{gyorfi2002distribution, tsybakov2009introduction}.

To accommodate additive modeling, we extend these definitions to the multivariate setting. Consider $p$ centered orthonormal bases $\{\psi_{1j}\}, \{\psi_{2j}\}, \dots, \{\psi_{pj}\}$. We define the multivariate additive function space as:
\begin{align*}
    &\mathcal{F}_p(\{\psi_{1j}\}, \{\psi_{2j}\}, \dots, \{\psi_{pj}\}) \\
    &:= \left\{ f: \mathbb{R}^p \to \mathbb{R} \mid f = \alpha + \sum_{k=1}^p f_k, \text{ where } \alpha \in \mathbb{R}, \; f_k \in \mathcal{F}_1(\{\psi_{kj}\}), \; k = 1, 2, \dots, p \right\}.
\end{align*}
If we utilize the centered trigonometric basis for each dimension, this space simply becomes the direct sum of $p$ individual $L_2$ spaces along with an intercept:
\begin{equation*}
    \mathcal{F}_p(\{\psi_{1j}\}, \{\psi_{2j}\}, \dots, \{\psi_{pj}\}) = \left\{ f = \alpha + f_1 + f_2 + \dots + f_p : f_k \in \mathcal{L}^2, \text{ where } k = 1, 2, \dots, p \right\}.
\end{equation*}

Following the same logic, we construct the additive Sobolev space $\mathcal{H}_p(s, \{\psi_{1j}\}, \dots, \{\psi_{pj}\})$ and the additive Sobolev ellipsoid $\mathcal{W}_p(s, Q, \{\psi_{1j}\}, \dots, \{\psi_{pj}\})$ by requiring each univariate component $f_k$ to reside in $\mathcal{H}_1(s, \{\psi_{kj}\})$ and $\mathcal{W}_1(s, Q, \{\psi_{kj}\})$, respectively. While our methodology readily adapts to varying smoothness levels across dimensions, we assume a uniform smoothness $s$ for all components to simplify the presentation. When the chosen bases are unambiguous, we will use the abbreviated notations $\mathcal{F}_p$, $\mathcal{H}_p(s)$, and $\mathcal{W}_p(s, Q)$.

Suppose $q_\tau\in \mathcal{W}_p(s, Q)$ and $s > 1/2$, by the Cauchy-Schwarz inequality, the true coefficients are absolutely summable. Specifically, there exists a constant $R < \infty$ such that 
$$
\|\thetaVec^*\|_1= \sum_{j=1}^\infty |j^s \theta_{f,j} j^{-s}|\le (\sum_{j=1}^\infty |j^s \theta_{f,j})|^2)^\frac{1}{2}(\sum_{j=1}^\infty j^{-2s})^\frac{1}{2} \le R
$$
Let $\|\psi_{kj}\|_\infty \le M$. This implies $q_\tau$ is deterministically bounded: $\|q_\tau\|_\infty \le MR := B$. This important observation will help us improve the online estimator in Section \ref{Sec:method}.

\subsection{Assumptions}
The two assumptions on the distribution of $X$ and $Y$  are considered below in this paper.

\begin{assumption} \label{ass:density_X}
The marginal distribution of $\X$ has a density $p_X(\x)$ strictly bounded away from zero and infinity: $0 < C_1 \le p_X(\x) \le C_2 < \infty$ for all $\x \in [0,1]^p$.
\end{assumption}

\begin{assumption} \label{ass:density_Y}
The conditional distribution of $Y$ given $\X=\x$ is absolutely continuous with density $p_{Y|\X}(y|\x)$. Furthermore, there exist constants $c_1, c_2 > 0$ such that $c_1 \le p_{Y|\X}(y|\x) \le c_2$ for all $\x \in [0,1]^p$ and all $y \in [-B, B]$.
\end{assumption}

We make several remarks regarding the assumptions. Assumption \ref{ass:density_X} imposes standard upper and lower bounds on the marginal density of the covariates. This condition is ubiquitous in nonparametric regression \citep{gyorfi2002distribution}, as it establishes the equivalence between the theoretical $L_2$ norm under the uniform Lebesgue measure (where our basis functions are orthogonal) and the $L_2(P_X)$ norm (which governs the actual prediction error), ensuring that the function can be consistently learned across its entire domain. Assumption \ref{ass:density_Y} is the fundamental requirement for quantile estimation. Unlike the squared loss, which is globally strongly convex, the pinball loss is only piecewise linear. In expectation, the local curvature (second derivative) of the pinball loss around the true quantile is precisely dictated by the conditional density $p_{Y|\X}$. The strict lower bound $c_1 > 0$ ensures that the expected risk is locally strongly convex, providing the necessary expected descent direction for the stochastic gradient updates to converge.  Crucially, it only requires the density to be bounded on the compact interval $[-B, B]$, not on the entire real line.

\section{Online quantile regression via sub-gradient descent}\label{Sec:method}
\subsection{Online Learning}\label{3}
To illustrate our online learning algorithm clearly, we first suppose the server receives the i.i.d. data  $(X_t,Y_t),t=1,2,3,\ldots$ in a sequance manner. Namely, at time $t$, we receive a new data point $(X_t,Y_t)$ and do not gain the access of the previous data. The online sub-gradient descent algorithm calculates the corresponding sub-gradient of coefficients $\theta_j, j=1,2,\ldots$ and then updates the current quantile function in the direction of the negative sub-gradient, \textit{without} storing this new observation. Next, we give the details of our online learning algorithm.

At iteration $t$, let $J_t$ be the truncation dimension. We define the $(1+pJ_t)$-dimensional basis vector:
\begin{equation}
    \PsiVec_t(\x) = \big(1, \psi_{11}(x^{(1)}), \dots, \psi_{pJ_t }(x^{(p)})\big)^\top.
\end{equation}
 The estimator maintains a coefficient vector $\thetaVec_t$. Upon receiving the data point $(\X_t, Y_t)$, we calculate the subgradient of the pinball loss: $g_t = (\tau - \I(Y_t \le \thetaVec_{t-1}^\top \PsiVec_t(\X_t)))\PsiVec_t(\X_t)$. Thus, we have
$$
    \tilde{\thetaVec}_t = \thetaVec_{t-1} + \gamma_t\cdot g_t.
$$

However, the online estimator $\tilde{\thetaVec}_t^T\PsiVec_t $ (a series) can be divergent as $t\to\infty$ and the expanded coefficients  for true quantile function satisfies $\|\thetaVec^*\|_1 \le R$.  We can not ensure $\| \tilde{\thetaVec}_t\|_1 \le R$ and thus the naive online estimator $\tilde{\thetaVec}_t^T\PsiVec_t $ is difficult to be statistically consistent to the true function $q_\tau$. Note that 
$$ 
\|\tilde{\thetaVec}_t^T\PsiVec_t\|_\infty\le \|\tilde{\thetaVec}_t\|_1 \|\PsiVec_t\|_\infty\le M\|\tilde{\thetaVec}_t\|_1.
$$
To solve above two problems, we thus make a second step where we project the updated coefficients onto the $\ell_1$ ball $\B(R) = \{\thetaVec : \|\thetaVec\|_1 \le R\}$ for some large $R>0$. Thus, the second update rule is:
\begin{equation}
    \thetaVec_t = \Pi_{\B(R)}(\tilde{\thetaVec}_t), \label{eq:proj_step}
\end{equation}
where  $\Pi_{\B(R)}(\tilde{\thetaVec}_t):= \arg\min_{\thetaVec\in \B(R)}\|\tilde{\thetaVec}_t-\thetaVec\|_2$. 

 According to above arguments, our functional estimator at time $t$ is $$\widehat{q}_t(\x) = \thetaVec_t^\top \PsiVec_t(\x).$$
Because $\|\thetaVec_t\|_1 \le R$ and $\|\PsiVec_t(\x)\|_\infty \le M$, we guarantee $\|\widehat{q}_t\|_\infty \le B$ deterministically. The projection $\Pi_{\B(R)}$ can be computed exactly in $O(J_t)$ time using soft-thresholding. The above estimation details are shown in Algorithm \ref{alg:pfsgd}. Since $J_t$ strictly increases as $t$ grows,  the vector $\PsiVec_t(\X_t)$ is slightly longer than $\thetaVec_{t-1}$. Thus in the step \textbf{Dimension Alignment}, we simply append zeros to the end of $\thetaVec_{t-1}$ (which implies the coefficients for the newly added high-frequency basis functions are initialized to $0$) so that the vector addition in equation \eqref{eq:grad_step} is mathematically well-defined.

\begin{algorithm}[H]
\caption{Projected Functional Stochastic Gradient Descent (P-FSGD) for Quantile Regression}
\label{alg:pfsgd}
\begin{algorithmic}[1]
\REQUIRE Target quantile $\tau \in (0,1)$, $\ell_1$-ball radius $R > 0$, sequence of step sizes $\{\gamma_t\}_{t=1}^n$, sequence of truncation dimensions $\{J_t\}_{t=1}^n$.
\STATE \textbf{Initialize:} $\thetaVec_0 \leftarrow \mathbf{0}$
\FOR{$t = 1, 2, 3,\dots$}
    \STATE Receive new streaming data point $(\X_t, Y_t)$.
    \STATE \textbf{Basis Evaluation:} Evaluate the basis vector $\PsiVec_t(\X_t) \in \R^{1+pJ_t}$.
    \STATE \textbf{Dimension Alignment:} If $J_t > J_{t-1}$, pad $\thetaVec_{t-1}$ with $p(J_t - J_{t-1})$ zeros to match the current dimension.
    \STATE \textbf{Prediction:} Compute the quantile estimate for the  current  data:
    \[
        \widehat{f}_{t-1}(\X_t) = \thetaVec_{t-1}^\top \PsiVec_t(\X_t)
    \]
    \STATE \textbf{Gradient Step:} Update the coefficients in the unconstrained space:
    \begin{equation} \label{eq:grad_step}
        \tilde{\thetaVec}_t = \thetaVec_{t-1} + \gamma_t\cdot g_t,
    \end{equation}
    where  $g_t = \PsiVec_t(\X_t)(\tau - \I(Y_t \le \widehat{f}_{t-1}(\X_t)))$ is the current subgardient at the new data $(\X_t, Y_t)$.\\
    \STATE \textbf{Projection Step:} Project the coefficients onto the $\ell_1$-ball $\B(R)$ to guarantee bounded coefficients evaluations:
    \begin{equation} \label{eq:proj_step}
        \thetaVec_t = \Pi_{\B(R)}(\tilde{\thetaVec}_t)
    \end{equation}
\RETURN the current coefficient vector $\thetaVec_t$.
\ENDFOR
\end{algorithmic}
\end{algorithm}

Next, Algorithm \ref{alg:l1sphere_proj} tells us how to make such $\ell_1$ projection in \eqref{eq:proj_step} in practice. This algorithm implements the well-known projection onto the simplex (absolute values) and then restores signs. Importantly, it runs in $O(J\ln J)$ time only due to sorting.

\begin{algorithm}
\caption{Projection $\mathbf{u}$ onto the $\ell_1$-ball with radius $R$}
\label{alg:l1sphere_proj}
\begin{algorithmic}[1]
\REQUIRE $\mathbf{u} = (u_1,\dots,u_J) \in \mathbb{R}^J$, with $\|\mathbf{u}\|_1 > R$.
\ENSURE $\mathbf{v} \in \mathbb{R}^J$ such that $\|\mathbf{v}\|_1 = R$ and $\|\mathbf{u}-\mathbf{v}\|_2$ is minimized.

\STATE Compute absolute values: $a_i = |u_i|$ for $i=1,\dots,J$.
\STATE Sort $a_i$ in non-increasing order: $a_{(1)} \ge a_{(2)} \ge \cdots \ge a_{(J)} \ge 0$.
\STATE Initialize $\rho = 0$, $sum = 0$, $\lambda = 0$.
\FOR{$j = 1$ to $J$}
    \STATE $sum = sum + a_{(j)}$
    \STATE $\lambda_{\text{temp}} = (sum - R) / j$
    \IF{$\lambda_{\text{temp}} \ge a_{(j+1)}$ \textbf{or} $j = J$}
        \STATE $\rho = j$
        \STATE $\lambda = \lambda_{\text{temp}}$
        \STATE \textbf{break}
    \ENDIF
\ENDFOR
\STATE Compute $t_i = \max(a_i - \lambda,\, 0)$ for $i=1,\dots,J$.
\STATE Recover signs: $v_i = \operatorname{sgn}(u_i) \cdot t_i$ (with $\operatorname{sgn}(0)=0$).
\RETURN $\mathbf{v}$
\end{algorithmic}
\end{algorithm}

Then, we aim to establish the minimax optimality of the proposed estimator. By Assumption \ref{ass:density_X}, it is sufficient to establish the expected Lebesgue $L_2$ error: $ \E \|\widehat{q}_t - q_\tau\|_{L_2}^2:= \E \int (\widehat{q}_t(\x) - q_\tau(\x))^2d\x $.

\begin{theorem} \label{thm:main1}
Suppose Assumptions \ref{ass:density_X} and \ref{ass:density_Y} hold. Let the step size be $\gamma_t = A/t$ and the truncation dimension be $J_t = \lceil t^{1/(2s+1)} \rceil$. For a sufficiently large constant $A$, the MSE of the P-FGD quantile estimator satisfies:
\begin{equation}
    \E \|\widehat{q}_t - q_\tau\|_{L_2}^2 = O\Big(t^{-\frac{2s}{2s+1}}\Big).
\end{equation}
\end{theorem}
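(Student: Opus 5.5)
The plan is a standard projected-SGD recursion in coefficient space, with a bias–variance split coming from the truncation. Let $\thetaVec^*_{J}$ denote the true coefficient vector truncated to the first $J$ coefficients per coordinate, padded to dimension $1+pJ$, and set $e_t:=\E\|\thetaVec_t-\thetaVec^*_{J_t}\|_2^2$. Since the basis is orthonormal and centered and $\widehat q_t-q_\tau=\sum (\theta_{t}-\theta^*)\psi$, Parseval gives
\[
\|\widehat q_t-q_\tau\|_{L_2}^2=\|\thetaVec_t-\thetaVec^*_{J_t}\|_2^2+\sum_k\sum_{j>J_t}\theta_{kj}^{*2}.
\]
The tail is at most $pQ^2J_t^{-2s}=O(t^{-2s/(2s+1)})$. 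It therefore suffices to show $e_t=O(t^{-2s/(2s+1)})$.

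\textbf{Step 1 (one-step recursion).} The vector $\thetaVec^*_{J_t}$ lies in $\B(R)$, and the $\ell_1$-ball is convex, so projecting onto it is nonexpansive. Padding with zeros leaves the distance unchanged and changes the target from $\thetaVec^*_{J_{t-1}}$ to $\thetaVec^*_{J_t}$. This gives
\[
\|\thetaVec_t-\thetaVec^*_{J_t}\|^2\le\|\thetaVec_{t-1}-\thetaVec^*_{J_t}\|^2+2\gamma_t\inner{g_t}{\thetaVec_{t-1}-\thetaVec^*_{J_t}}+\gamma_t^2\|g_t\|^2 .
\]
The last term is bounded by $\gamma_t^2(1+pJ_tM^2)$. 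Expanding $\|\thetaVec_{t-1}-\thetaVec^*_{J_t}\|^2$ costs $\|\thetaVec_{t-1}-\thetaVec^*_{J_{t-1}}\|^2$ plus the newly appended true coefficients, $\sum_{J_{t-1}<j\le J_t}\theta_{kj}^{*2}$; the cross term vanishes because $\thetaVec_{t-1}$ is zero in those slots. Summed over $t$, these added pieces telescope, and after weighting I will handle them with the same weighted-sum argument as the tail.

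\textbf{Step 2 (the cross term, main obstacle).} Condition on $\mathcal F_{t-1}$. Then
\[
\E[\inner{g_t}{\thetaVec_{t-1}-\thetaVec^*_{J_t}}\mid\mathcal F_{t-1}]=\E\big[(\tau-F_{Y|\X}(\widehat q_{t-1}(\X)))\,(\widehat q_{t-1}(\X)-q_{J_t}(\X))\big],
\]
where $q_{J_t}$ is the truncated true function. Write $\tau-F(\widehat q_{t-1})=F(q_\tau)-F(\widehat q_{t-1})=-\kappa(\X)(\widehat q_{t-1}-q_\tau)$ with $\kappa\in[c_1,c_2]$. This uses the mean value theorem on $[-B,B]$, which is valid because both $\widehat q_{t-1}$ and $q_\tau$ are bounded by $B$ (Assumption \ref{ass:density_Y}). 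Decompose $\widehat q_{t-1}-q_{J_t}=(\widehat q_{t-1}-q_\tau)+r_t$, where $r_t=q_\tau-q_{J_t}$ is orthogonal in $L_2(d\x)$ to $\mathrm{span}\{\PsiVec_t\}$. This decomposition is the "projection identity". Then:
\begin{itemize}
\item The main term is at most $-c_1C_1\|\widehat q_{t-1}-q_\tau\|_{L_2}^2\le -c_1C_1\|\thetaVec_{t-1}-\thetaVec^*_{J_t}\|^2$.
\item The remainder term $\E[\kappa p_X(\widehat q_{t-1}-q_\tau)r_t]$ does not vanish, because $\kappa p_X$ is not constant. I bound it by Young's inequality as $\tfrac{c_1C_1}{2}\|\widehat q_{t-1}-q_\tau\|^2+C\|r_t\|_{L_2}^2$, with $\|r_t\|^2\lesssim J_t^{-2s}$.
\end{itemize}
The net result is a contraction:
\[
e_t\le(1-\tfrac{\mu A}{t})e_{t-1}+\tfrac{CA}{t}J_t^{-2s}+\tfrac{CA^2J_t}{t^2}+(\text{new-coefficient terms}),\qquad \mu=c_1C_1 .
\]

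\textbf{Step 3 (solving the recursion).} Both forcing terms are of order $t^{-1-2s/(2s+1)}$. A standard Chung-type lemma applies: if $\mu A>2s/(2s+1)$, which is where "$A$ sufficiently large" enters, then $e_t=O(t^{-2s/(2s+1)})$. I will prove it by induction with the ansatz $e_t\le Dt^{-2s/(2s+1)}$. For the new-coefficient terms, I unroll the recursion with weights $\prod_{u>t}(1-\mu A/u)\approx(t/n)^{\mu A}$. The sum $\sum_t (t/n)^{\mu A}\sum_{J_{t-1}<j\le J_t}\theta_j^{*2}$ is controlled by $\sum_j\theta_j^{*2}(t_j/n)^{\mu A}$, where $t_j\approx j^{2s+1}$. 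Splitting at $j=J_n$ bounds it by $J_n^{-2s}Q^2$ plus a term of the same order, using $j^{2s}\theta_j^2$ summable.

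I expect Step 2 to be the delicate point. It needs boundedness of $\widehat q_{t-1}$, which the projection guarantees, to invoke the density lower bound, and it needs the orthogonality to keep the truncation bias at order $J_t^{-2s}$ rather than $J_t^{-s}$.
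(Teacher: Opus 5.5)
Your proposal is correct and follows essentially the paper's argument: the paper proves the mini-batch Theorem~\ref{thm:main2}, of which this statement is the case $n_t=1$, using the same ingredients you list (nonexpansiveness of the $\ell_1$-ball projection, a mean-value plus orthogonal-remainder treatment of the cross term closed by a weighted AM--GM step, a $\gamma_t^2 J_t$ variance bound, and a contracting recursion). The only real difference is bookkeeping: the paper runs the recursion on the full error $\E\|\widehat q_t-q_\tau\|_{L_2}^2$, so zero-padding never changes the tracked quantity, and your separate weighted-sum treatment of the newly appended coefficients becomes unnecessary; in your own setup, adding the tail $\sum_k\sum_{j>J_t}\theta_{kj}^{*2}$ to both sides of the Step~1--2 inequality closes the recursion directly, because you already have $-\mu\gamma_t\|\widehat q_{t-1}-q_\tau\|_{L_2}^2$ before you discard the tail.
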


\subsection{Mini-Batch Extension}
When $n_t$ i.i.d. samples $\{(\X_{t,i},Y_i)\}_{i=1}^{n_t}$ arrive at time $t$, we compute the averaged mini-batch gradient:
$$
G_t= \frac{1}{n_t}\sum_{i=1}^{n_t}  g_{t,i} 
$$
where    $g_{t,i} = \PsiVec_t(\X_{t,i})(\tau - \I(Y_{t,i} \le \widehat{f}_{t-1}(\X_{t,i})))$ and update the coefficients by
$$
\tilde{\thetaVec}_t = \thetaVec_{t-1} + \gamma_t\cdot G_t.
$$
Similar to previous arguments, we also apply the projection step:
$$
 \thetaVec_t = \Pi_{\B(R)}(\tilde{\thetaVec}_t).
$$

\begin{theorem} \label{thm:main2}
Suppose Assumptions \ref{ass:density_X} and \ref{ass:density_Y} hold. Let the step size be $\gamma_t =A\cdot n_t/N_t$ where $N_t = \sum_{i=1}^t n_i$ is the cumulative sample size at time $t$. When $n_t/N_t\to 0$ and the truncation dimension be $J_t = \lceil N_t^{1/(2s+1)} \rceil$, for a sufficiently large constant $A$, the MSE of the P-FGD quantile estimator satisfies:
\begin{equation}
    \E \|\widehat{q}_t - q_\tau\|_{L_2}^2 = O\Big(N_t^{-\frac{2s}{2s+1}}\Big).
\end{equation}
\end{theorem}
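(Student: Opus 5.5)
The plan is to reuse the single-sample argument behind Theorem \ref{thm:main1}, replacing the per-step noise bound by its mini-batch version and the time index $t$ by the cumulative sample size $N_t$. Let $\thetaVec^*_{J_t}$ be the truncation of the true coefficients to the first $J_t$ basis functions per coordinate, padded with zeros. Because $\|\thetaVec^*_{J_t}\|_1\le\|\thetaVec^*\|_1\le R$, we have $\thetaVec^*_{J_t}\in\B(R)$. By orthonormality of the centered bases and Pythagoras,
\[
\|\widehat{q}_t-q_\tau\|_{L_2}^2=\|\thetaVec_t-\thetaVec^*_{J_t}\|_2^2+\sum_{k}\sum_{j>J_t}\theta_{kj}^{*2}.
\]
The tail sum is at most $pQ^2J_t^{-2s}=O(N_t^{-2s/(2s+1)})$. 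It therefore suffices to bound $D_t:=\E\|\thetaVec_t-\thetaVec^*_{J_t}\|_2^2$.

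\textbf{Step 1: one-step recursion.} Since $\Pi_{\B(R)}$ is nonexpansive and $\thetaVec^*_{J_t}\in\B(R)$,
\[
\|\thetaVec_t-\thetaVec^*_{J_t}\|_2^2\le\|\thetaVec_{t-1}-\thetaVec^*_{J_t}\|_2^2+2\gamma_t\inner{G_t}{\thetaVec_{t-1}-\thetaVec^*_{J_t}}+\gamma_t^2\|G_t\|_2^2 .
\]
Zero padding of $\thetaVec_{t-1}$ does not change it. The first term therefore equals $\|\thetaVec_{t-1}-\thetaVec^*_{J_{t-1}}\|_2^2$ plus the newly added true coefficients, and the latter are $O(J_{t-1}^{-2s}-J_t^{-2s})$ in squared norm.

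For the noise term we use $\|g_{t,i}\|_2^2\le 1+pJ_tM^2$, so $\E\|G_t\|_2^2\lesssim J_t$. Here I would split $G_t$ into its conditional mean plus a centered part whose second moment is $O(J_t/n_t)$. The mean part has squared norm $O(1)$, since its coefficients are $L_2$ projections of the bounded function $\tau-F_{Y|\X}(\widehat f_{t-1}(\X)\mid\X)$ times $p_X$. This gives $\gamma_t^2\,\E\|G_t\|_2^2\lesssim \gamma_t^2(1+J_t/n_t)$.

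\textbf{Step 2: the cross term, which is the main obstacle.} Conditionally on the past,
\[
\E[G_t\mid\mathcal F_{t-1}]=\E\big[\PsiVec_t(\X)\big(\tau-F_{Y|\X}(\widehat f_{t-1}(\X)\mid\X)\big)\big].
\]
Its inner product with $\thetaVec_{t-1}-\thetaVec^*_{J_t}$ equals $\E[h(\X)(\tau-F(\widehat f_{t-1}(\X)\mid \X))]$ with $h=\widehat f_{t-1}-\Pi_{J_t}q_\tau$. The Hilbert projection identity enters here: because $\widehat f_{t-1}-\Pi_{J_t}q_\tau$ lies in the span, we can write $h=(\widehat f_{t-1}-q_\tau)+(q_\tau-\Pi_{J_t}q_\tau)$. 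Both $\widehat f_{t-1}$ and $q_\tau$ are bounded by $B$, so the mean value theorem with Assumption \ref{ass:density_Y} (density $\ge c_1$ on $[-B,B]$) and $\tau=F(q_\tau\mid\X)$ gives
\[
\E[(\widehat f_{t-1}-q_\tau)(\tau-F(\widehat f_{t-1}))]\le -c_1C_1\|\widehat f_{t-1}-q_\tau\|_{L_2}^2 .
\]
The residual piece is at most $c_2C_2\|\widehat f_{t-1}-q_\tau\|_{L_2}\|q_\tau-\Pi_{J_t}q_\tau\|_{L_2}$ by Cauchy--Schwarz and Assumption \ref{ass:density_X}. Young's inequality absorbs it, leaving $-\tfrac{c_1C_1}{2}D_{t-1}+O(J_t^{-2s})$, using again $\|\widehat f_{t-1}-q_\tau\|^2\ge \|\thetaVec_{t-1}-\thetaVec^*\|^2$. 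The delicate point is that $p_X$ is not uniform. Orthogonality holds in Lebesgue $L_2$, so the decomposition must be done against that inner product and converted to $L_2(P_X)$ only at the end, via $C_1\le p_X\le C_2$.

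\textbf{Step 3: solving the recursion.} Write $\eta_t=n_t/N_t$, so that $\gamma_t=A\eta_t$. This yields
\[
D_t\le(1-cA\eta_t)D_{t-1}+C\eta_t J_t^{-2s}+CA^2\eta_t^2(1+J_t/n_t)+(\text{padding terms}).
\]
Since $\eta_t^2J_t/n_t=\eta_tJ_t/N_t$, both forcing terms are of order $\eta_t N_t^{-2s/(2s+1)}$ when $J_t\asymp N_t^{1/(2s+1)}$; the first uses $J_t^{-2s}$ and the second $J_t/N_t$. We then prove $D_t\le K N_t^{-\beta}$ with $\beta=2s/(2s+1)$ by induction. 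Because $\eta_t\to0$, we have $N_{t-1}^{-\beta}=N_t^{-\beta}(1-\eta_t)^{-\beta}\le N_t^{-\beta}(1+\beta\eta_t+O(\eta_t^2))$. So $A$ large enough that $cA>\beta$ closes the induction, where the early finite stage is handled by the trivial bound $D_t\le 4R^2$. The padding terms telescope, because $\sum_t (J_{t-1}^{-2s}-J_t^{-2s})$ weighted by the contraction products stays $O(N_t^{-\beta})$. With $n_t\equiv1$ this is exactly the case of Theorem \ref{thm:main1}.
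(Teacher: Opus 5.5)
Your architecture is the same as the paper's: a nonexpansive projection onto $\B(R)$, the mean-value/density argument with Young's inequality for the cross term, a mean/variance split of the mini-batch noise, and a recursion in $\eta_t=n_t/N_t$. However, there is a genuine gap in how you bound the conditional mean of $G_t$, and the theorem as stated does not follow.

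\textbf{The gap.} You bound $\|\E[G_t\mid\mathcal F_{t-1}]\|_2^2$ by a constant, so your recursion carries the forcing term $CA^2\eta_t^2$, which is not divided by $n_t$. In Step~3 you only check that $\eta_tJ_t^{-2s}$ and $\eta_t^2J_t/n_t$ are $O(\eta_tN_t^{-\beta})$, and you silently drop $A^2\eta_t^2$. That term is $O(\eta_tN_t^{-\beta})$ only when $n_t=O(N_t^{1/(2s+1)})$, i.e.\ essentially $n_t\lesssim J_t$. The theorem assumes only $n_t/N_t\to0$.

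For example, take $n_t=t^2$. Then $N_t\asymp t^3$ and $\eta_t\asymp 3/t$. The recursion $D_t\le(1-cA\eta_t)D_{t-1}+CA^2\eta_t^2$ then yields only $D_t=O(1/t)=O(N_t^{-1/3})$. This is slower than $N_t^{-2s/(2s+1)}$ for every $s>1/4$. So your argument covers $n_t\equiv1$ and moderate batches, but not the stated result.

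\textbf{The fix.} The missing idea is to bound the mean part by the current error rather than by a constant.
\begin{itemize}
\item Its coordinates are the Lebesgue $L_2$ inner products of $\alpha(\x)=\big(\tau-F_{Y|\X=\x}(\widehat q_{t-1}(\x))\big)p_X(\x)$ with the orthonormal system $\PsiVec_t$. Bessel's inequality therefore gives $\|\E[G_t\mid\mathcal F_{t-1}]\|_2^2\le\|\alpha\|_{L_2}^2$.
\item Since $\tau=F_{Y|\X=\x}(q_\tau(\x))$, and both $q_\tau(\x)$ and $\widehat q_{t-1}(\x)$ lie in $[-B,B]$ where the conditional density is at most $c_2$, you get $\|\alpha\|_{L_2}^2\le C_2^2c_2^2\|\widehat q_{t-1}-q_\tau\|_{L_2}^2$.
\item The resulting term $\gamma_t^2C_2^2c_2^2\,\E\|\widehat q_{t-1}-q_\tau\|_{L_2}^2$ is absorbed into the contraction, because $\gamma_t^2=o(\gamma_t)$.
\end{itemize}
This is exactly the paper's bound on its term V. It leaves only the forcing term $\gamma_t^2J_t/n_t$, which you already handle correctly.

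\textbf{A smaller issue: the padding terms.} You claim the newly padded true coefficients have squared norm $O(J_{t-1}^{-2s}-J_t^{-2s})$. This is false for a general element of the ellipsoid. The tail mass can sit on a single index in $(J_{t-1},J_t]$ and be of order $J_t^{-2s}$, whereas $J_{t-1}^{-2s}-J_t^{-2s}$ can be of order $J_t^{-2s-1}$. Moreover, unweighted additive terms of size up to $N_t^{-\beta}$ do not fit your induction $D_t\le KN_t^{-\beta}$, whose slack per step is only of order $\eta_tN_t^{-\beta}$.

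You can avoid these terms altogether. Track $e_t=\E\|\widehat q_t-q_\tau\|_{L_2}^2$ instead of $D_t$, as the paper does via $\|\widehat q_t-q_\tau\|_{L_2}^2\le\|\tilde q_t-q_\tau\|_{L_2}^2$, where $\tilde q_t$ is the pre-projection iterate. Since $\|\thetaVec_{t-1}-\thetaVec^*_{J_t}\|_2^2+\sum_k\sum_{j>J_t}\theta_{kj}^{*2}=\|\widehat q_{t-1}-q_\tau\|_{L_2}^2$, the padding cancels exactly. Your Step~2 already delivers $-\tfrac{c_1C_1}{2}\|\widehat q_{t-1}-q_\tau\|_{L_2}^2$, so there is no need to pass to $D_{t-1}$.
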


\begin{remark}
  An interesting finding is that our online estimator also achieves the optimal minin-max rate although those $N_t$ data points are trained in batches.
\end{remark}

\section{Ensemble online estimators}
In this section, we give a method to improve the estimator proposed before. Motivated by the idea of random forest in \cite{breiman2001random}, in each step we can perform the gradient descent only w.r.t. $S_t<J_t$ variables that are randomly selected  in the full coordinates of $\theta_t$. Let $\hat{q}_\tau^r$ be the such online estimator at time $t$, the final estimator is equal to $\E(\hat{q}_\tau^r)$ which is an ensemble estimator and could be achieved in practice by calculating the mean average of some independently generated $\hat{q}_\tau^r$. The theoretical analysis of  $\E(\hat{q}_\tau^r)$ is similar to Theorem \ref{thm:main2}.

\section{Conclusion}
We have introduced a computationally efficient P-FGD estimator for online nonparametric quantile regression without storing historical data. To ensure our online estimator be always well defined, we implemented an $\ell_1$-projection  without sacrificing the $O(J_t\ln J_t)$ algorithmic runtime. Using a novel Hilbert space projection argument, we  rigorously prove our online estimator achieves  minimax optimality. The further research could be studying the order of selected basis functions when a data streaming is given.

\section{Proofs}

\subsection{Proof of Theorem \ref{thm:main2}}
Recall at time $J_t$ we have the $(1+pJ_t)$-dimensional basis vector:
\begin{equation*}
    \PsiVec_t(\x) = \big(1, \psi_{11}(x^{(1)}), \dots, \psi_{pJ_t }(x^{(p)})\big)^\top,\quad \x\in [0,1]^p.
\end{equation*}
At time $t$, the  online update  is defined by
\begin{align}
  \tilde{\thetaVec}_t& = \thetaVec_{t-1} + \gamma_t\cdot G_t \label{h}\\
  \thetaVec_t &= \Pi_{\B(R)}(\tilde{\thetaVec}_t),\label{shdj}
\end{align}
where $\Pi_{\B(R)}$ is the projection operator and   $G_t = \frac{1}{n_t}\sum_{i=1}^{n_t}( \tau - \I(Y_{t,i} \le \thetaVec_{t-1}^\top \PsiVec_t(\X_{t,i})))\PsiVec_t(\X_{t,i})$. Define the truncated quantile function $\check{q}_{J_t}$  by
\begin{equation}\label{kjk}
   \check{q}_{J_t}:= \sum_{k=1}^{p}\sum_{j=1}^{J_t} \langle q_\tau(\x), \psi_{k,j }(\x^{(k)}) \rangle_{L^2}\psi_{k,j }
\end{equation}
and the intermediate estimator w.r.t. $\tilde{\thetaVec}_t$:
$
 \tilde{q}_{t}(\x):= \tilde{\thetaVec}_t^T  \PsiVec_t(\x).
$
For any function $g=\sum_{k=1}^{p}\sum_{j=1}^{J_t}a_{k,j}\psi_{k,j }$, we have $\|g\|_{L^2}^2=\sum_{k=1}^{p}\sum_{j=1}^{J_t}a_{k,j}^2$ due to the orthogonality of $\psi_{k,j }$. Thus, $\hat{q}_{t}$ can be regarded as the projection of $\tilde{q}_{t}$ onto the function space $\mathcal{F}:=\{g=\sum_{k=1}^{p}\sum_{j=1}^{J_t}a_{k,j}\psi_{k,j } :  \sum_{j=1}^{J_t}|a_{k,j}|\le R\}$. Since $\sum_{k=1}^{p}\sum_{j=1}^{J_t}|\langle q_\tau(\x), \psi_{k,j }(\x^{(k)}) \rangle_{L^2}|\le \|\theta^*\|_1\le R$, we have $\check{q}_{J_t}\in\mathcal{F}$. By \eqref{shdj} and Hilbert's projection theorem, we have 
\begin{align}
 \| \tilde{q}_{t}-  \check{q}_{J_t}\|_{L^2}^2 &= \| \tilde{q}_{t}- \hat{q}_{t}+\hat{q}_{t}- \check{q}_{J_t}\|_{L^2}^2 \nonumber\\
 & = \| \tilde{q}_{t}- \hat{q}_{t}\|_{L^2}^2+\|\hat{q}_{t}- \check{q}_{J_t}\|_{L^2}^2+  2\langle \tilde{q}_{t}-\hat{q}_{t},\hat{q}_{t}- \check{q}_{J_t} \rangle_{L^2}, \label{se}\\
 &\ge \|\hat{q}_{t}- \check{q}_{J_t}\|_{L^2}^2, \label{sdq}
\end{align}
where in \eqref{se} Hilbert's projection theorem tells that $\langle \tilde{q}_{t}-\hat{q}_{t},\hat{q}_{t}- \check{q}_{J_t} \rangle_{L^2}\ge 0$. Adding $\|q_{\tau}- \check{q}_{J_t}\|_{L^2}^2$ on  both sides of \eqref{sdq}, by the orthogonality of $\Psi_{k,j}$ it is known that
\begin{equation}\label{jklk}
  \|\hat{q}_{t}- q_{\tau}\|_{L^2}^2\le \| \tilde{q}_{J_t}-q_\tau\|_{L^2}^2\quad a.s..
\end{equation}
The above inequality is essential since it implies we only need to analyze  $\| \tilde{q}_{J_t}-q_\tau\|_{L^2}^2$ later.

 According to \eqref{h}, the following relationship holds:
 \begin{align}
  \E \| \tilde{q}_{J_t}-q_\tau\|_{L^2}^2 & = \E\|  (\thetaVec_{t-1} + \gamma_t\cdot G_t)^T\PsiVec_t-q_\tau\|_{L^2}^2\nonumber\\
   &=\E\| \hat{q}_{t-1}-q_\tau+ \gamma_t\cdot G_t ^T\PsiVec_t \|_{L^2}^2\nonumber\\
   &= \E\| \hat{q}_{t-1}-q_\tau\|_{L^2}^2+\underbrace{2\E(\langle  \hat{q}_{t-1}-q_\tau, \gamma_t\cdot G_t ^T\PsiVec_t  \rangle_{L^2})}_{II}+\underbrace{\E\|\gamma_t\cdot  G_t ^T\PsiVec_t \|_{L^2}^2}_{III}. \label{ui2}
 \end{align}
 In order to establish recursion related to $\E \| \hat{q}_{t}-q_\tau\|_{L^2}^2$ and $\E \| \hat{q}_{t-1}-q_\tau\|_{L^2}^2$, we next bound II and III respectively. \newline
 
 {\sc Upper bound of Part II.} By law of total expectations, it is known 
 \begin{align}
   II & = 2\E(\langle  \hat{q}_{t-1}-q_\tau, \gamma_t\cdot G_t^T\PsiVec_t  \rangle_{L^2})\nonumber\\
   &=  2\E(\langle  \hat{q}_{t-1}-q_\tau, \gamma_t\cdot (s_{t,1}\PsiVec_t(\X_{t,1}))^T\PsiVec_t  \rangle_{L^2}),\label{Yksd}
 \end{align}
 where $s_{t,1}:=\tau - \I(Y_{t,1} \le \thetaVec_{t-1}^\top \PsiVec_t(\X_{t,1}))$ denotes the outer subgradient of pinball loss.

 We use the theory of RKHS to analyze \eqref{Yksd}.  Define the subspace $\mathcal{G}$ in $L^2$ space by $\mathcal{G}_t:=\{g=\sum_{k=1}^{p}\sum_{j=1}^{J_t}a_{k,j}\psi_{k,j } :a_{k,j}\in\R   \}$. The reproducing kernel for the truncated space $\mathcal{G}_t$ is exactly $K_t(\mathbf{u}, \mathbf{v}): = \PsiVec_t(\mathbf{u})^\top \PsiVec_t(\mathbf{v})$. Since $L^2$ is a Hilbert space, we can find another infinite dimensional subspace $\mathcal{G}_t^\perp$ s.t. $\mathcal{G}_t^\perp\bigoplus\mathcal{G}_t=L^2$.  Later, we need an important property of $K_t$ below. For any $g\in \mathcal{G}_t$ and $g^\perp\in \mathcal{G}_t^\perp$, it is not difficult to see
 \begin{align*}
  &\langle g, K(\x,\cdot) \rangle = g(\x), \quad \forall \x\in [0,1]^p,\\
  &\langle g^\perp, K(\x,\cdot) \rangle =0, \quad \quad \forall \x\in [0,1]^p.
 \end{align*}
 According to this property of $K_t$, we can calculate the inner product in II as follows
\begin{equation}\label{1}
   \inner{\widehat{q}_{t-1} - q_\tau}{K_t(\X_{t,1}, \cdot)}_{L_2} = \widehat{q}_{t-1}(\X_{t,1}) - \check{q}_{J_t}(\X_{t,1}),
\end{equation}
where we have defined $\check{q}_{J_t}$ in \eqref{kjk}. On the other hand, recall $\X_t:=\{\X_{t,1},\ldots,\X_{t,n_t}\}$ and $Y_t:=\{Y_{t,1},\ldots,Y_{t,n_t}\}$ be the collection of data at time $t$. By the Mean Value Theorem on the conditional CDF, 
\begin{equation}\label{2}
  \E[s_{t,1} \mid \X_1,Y_1,\ldots,\X_{t-1},Y_{t-1},\X_t ] = - p_{Y|\X_{t,1}}(\xi_t) (\widehat{q}_{t-1}(\X_t) - q_\tau(\X_t)),
\end{equation}
where $\tau= \Pro(Y\le q_\tau(\X)\mid \X)\ a.s.$ is used. Note that $\xi_t$ depends on $\X_1,Y_1,\ldots,\X_{t-1},Y_{t-1},\X_{t,1}$.  Because $\|\widehat{q}_{t-1}\|_\infty\le \|\hat{\theta}_{t-1}\|_1\cdot\sup_{k,j}\|\Psi_{k,j}\|_\infty \le RM \le B$ and $\|q_\tau\|_\infty \le  B$ where $M=\sup_{k,j}\|\Psi_{k,j}\|_\infty$ and the constant $B$ is in the last paragraph of Section \ref{sec:basis}, we know $\xi_t \in [-B, B]$, and thus $c_1 \le p_{Y|\X_{t,1}}(\xi_t) \le c_2$ by Assumption \ref{ass:density_Y}.

Next, we establish the following novel lemma based on Hilbert space orthogonal projection by using \eqref{1} and \eqref{2}:

\begin{lemma} \label{lem:hilbert_trick}
Let $q(\x) = \widehat{q}_{t-1}(\x) - q_\tau(\x)$, and let $h_t = \widehat{q}_{t-1} - \check{q}_{J_t}$ be its projection onto the first $J_t$ basis functions. Then:
\begin{equation*}
    II  \le -\gamma_t (c_1 C_1) \|h_t\|_{L_2}^2 + \gamma_t \frac{c_2^2 C_2^2}{c_1 C_1} \|q_\tau - \check{q}_{J_t}\|_{L_2}^2.
\end{equation*}
\end{lemma}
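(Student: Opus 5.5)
We will prove the bound by conditioning on the past and on the design point, writing II as a weighted $L^2(P_X)$ inner product, and splitting the residual $q$ into its in-span part $h_t$ and its out-of-span part $q_\tau-\check{q}_{J_t}$.

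\textbf{Step 1 (reduce II to a point evaluation).} By the reproducing property \eqref{1}, $(s_{t,1}\PsiVec_t(\X_{t,1}))^\top\PsiVec_t = s_{t,1}K_t(\X_{t,1},\cdot)$, so
\begin{equation*}
II = 2\gamma_t\,\E\big[s_{t,1}\,h_t(\X_{t,1})\big].
\end{equation*}
Here $h_t$ is measurable with respect to the past $\mathcal{F}_{t-1}$, since $\check{q}_{J_t}$ is deterministic.

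\textbf{Step 2 (condition on $\X_{t,1}$).} Conditioning on $\mathcal{F}_{t-1}$ and $\X_{t,1}$ and applying \eqref{2} gives
\begin{equation*}
II = -2\gamma_t\,\E\big[w_t(\X_{t,1})\,q(\X_{t,1})\,h_t(\X_{t,1})\big],
\end{equation*}
with weight $w_t(\x)=p_{Y|\X=\x}(\xi_t(\x))\in[c_1,c_2]$. Since $\X_{t,1}$ is independent of $\mathcal{F}_{t-1}$, this equals
\begin{equation*}
-2\gamma_t\,\E\int w_t\,q\,h_t\,p_X\,d\x .
\end{equation*}

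\textbf{Step 3 (split the residual).} Write $q = h_t - r_t$ with $r_t = q_\tau-\check{q}_{J_t}$. Then
\begin{equation*}
-2\gamma_t\int w_t q h_t p_X = -2\gamma_t\int w_t p_X h_t^2 + 2\gamma_t\int w_t p_X\, r_t h_t .
\end{equation*}
By Assumptions \ref{ass:density_X}--\ref{ass:density_Y}, $c_1C_1\le w_tp_X\le c_2C_2$, so the first term is at most $-2\gamma_t c_1C_1\|h_t\|_{L_2}^2$.

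\textbf{Step 4 (absorb the cross term).} This is the main obstacle. The weight $w_tp_X$ is not constant, so Lebesgue orthogonality of $r_t\in\mathcal{G}_t^\perp$ and $h_t\in\mathcal{G}_t$ cannot kill the cross term. We therefore bound it crudely, using Cauchy--Schwarz followed by Young's inequality $2ab\le \epsilon a^2+b^2/\epsilon$ with $\epsilon=c_1C_1$:
\begin{equation*}
2\gamma_t\int w_tp_X|r_t||h_t|\le 2\gamma_t c_2C_2\|r_t\|_{L_2}\|h_t\|_{L_2}\le \gamma_t c_1C_1\|h_t\|_{L_2}^2+\gamma_t\frac{c_2^2C_2^2}{c_1C_1}\|r_t\|_{L_2}^2 .
\end{equation*}
The part $\gamma_t c_1C_1\|h_t\|_{L_2}^2$ cancels half of the $-2\gamma_t c_1C_1\|h_t\|^2$ from Step 3. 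Taking expectations then yields exactly the stated bound:
\begin{equation*}
II\le -\gamma_t c_1C_1\,\E\|h_t\|_{L_2}^2+\gamma_t\frac{c_2^2C_2^2}{c_1C_1}\|q_\tau-\check{q}_{J_t}\|_{L_2}^2 ,
\end{equation*}
where the first term is to be read in expectation.

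The points to verify carefully are:
\begin{itemize}
\item the measurability and independence used to exchange $\E$ with the integral against $p_X$ in Step 2;
\item that $\xi_t$ lies in $[-B,B]$, so the density bounds apply; this was already argued after \eqref{2}.
\end{itemize}
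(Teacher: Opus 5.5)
Your proposal is correct and follows essentially the same route as the paper: you reduce II to $2\gamma_t\E[s_{t,1}h_t(\X_{t,1})]$ via the reproducing property, condition to obtain a density weight in $[c_1,c_2]$, split $q=h_t-(q_\tau-\check{q}_{J_t})$, and absorb the cross term with a weighted AM--GM inequality. The only difference is that the paper applies AM--GM pointwise inside the integral rather than after Cauchy--Schwarz, which gives identical constants; your reading of the first term as $\E\|h_t\|_{L_2}^2$ is the right one and is how the lemma is used in \eqref{d2}.
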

\begin{proof}
Decompose $q = h_t + q^\perp$, where $q^\perp = \check{q}_{J_t} - q_\tau$ is the orthogonal approximation error. According to \eqref{Yksd}, \eqref{1}, \eqref{2} and the law of total expectation, 
\begin{align*}
    II &= 2\gamma_t\cdot \E[s_{t,1}(\widehat{q}_{t-1}(\X_t) - \check{q}_{J_t}(\X_t))]\\
    &=2\gamma_t\cdot \E[ (\widehat{q}_{t-1}(\X_t) - \check{q}_{J_t}(\X_t))\E(s_{t,1}  \mid \X_1,Y_1,\ldots,\X_{t-1},Y_{t-1},\X_t) ]\\
    &=-2\gamma_t\cdot \E( p_{Y|\X_{t,1}}(\xi_t) q(\X_t) h_t(\X_t) ) \\
    &= -2\gamma_t\left[ \E(p_{Y|\X_{t,1}}(\xi_t) h_t^2(\X_t)) + \E(p_{Y|\X_{t,1}}(\xi_t) h_t(\X_t)q^\perp(\X_t)) \right]\\
    &\le 2\gamma_t\left[- c_1 C_1 \|h_t\|_{L_2}^2 + c_2 C_2 \int |h_t(\x)| |q^\perp(\x)| d\x\right],
\end{align*}
where in the last line Assumption \ref{ass:density_X}\&\ref{ass:density_Y} are used. Applying the weighted AM-GM inequality $ab \le \frac{c_1 C_1}{2 c_2 C_2} a^2 + \frac{c_2 C_2}{2 c_1 C_1} b^2$ to the second term yields:
\begin{align*}
   - c_1 C_1 \|h_t\|_{L_2}^2 + c_2 C_2 \int |h_t(\x)| |q^\perp(\x)| d\x &\le -c_1 C_1 \|h_t\|_{L_2}^2 + \left( \frac{c_1 C_1}{2} \|h_t\|_{L_2}^2 + \frac{c_2^2 C_2^2}{2 c_1 C_1} \|q^\perp\|_{L_2}^2 \right) \\
    &= -\frac{c_1 C_1}{2} \|h_t\|_{L_2}^2 + \frac{c_2^2 C_2^2}{2 c_1 C_1} \|q^\perp\|_{L_2}^2.
\end{align*}
Multiplying by $2\gamma_t$ completes the proof.
\end{proof}

By Lemma \ref{lem:hilbert_trick} and orthogonality of $\Psi_{k,j}$, we now have the decomposition:
\begin{equation}\label{d2}
  II \le -\gamma_t (c_1 C_1) \underbrace{\E(\|\widehat{q}_{t-1}-q_\tau\|^2_{L^2})}_{MSE\ at\ time\ t-1} + \gamma_t (\frac{c_2^2 C_2^2}{c_1 C_1}+c_1C_1) \underbrace{\|q_\tau - \check{q}_{J_t}\|_{L_2}^2}_{approximation\ error}.
\end{equation}
We bound the  approximation error $\|q_\tau - \check{q}_{J_t}\|_{L_2}^2$ as follows. Let $q_\tau=\sum_{k=1}^{p}f_k=\sum_{k=1}^{p}\sum_{j=1}^{\infty}\theta^*_{k,j}\Psi_{k,j}$ where $f_k\in  \mathcal{W}_1(s, Q, \{\psi_j\})$. Then,
\begin{align}
  \|q_\tau - \check{q}_{J_t}\|_{L_2}^2 & = \int \left(\sum_{k=1}^{p}\sum_{j=1}^{J_t+1}\theta^*_{k,j}\Psi_{k,j}(\x)\right)^2 d\x\nonumber\\
  &= \sum_{k=1}^{p}\sum_{j=1}^{J_t+1}(\theta^*_{k,j})^2\nonumber\\
  &\le \sum_{k=1}^{p}\frac{1}{J_t^{2s}} \sum_{j=J_t+1}^{\infty}(\theta^*_{k,j})^2 j^{2s} \le \frac{\sum_{k=1}^{p}\|f_k\|_{soblev}^2}{J_t^{2s}}. \label{ds}
\end{align}
The combination of \eqref{d2} and \eqref{ds} gives that
\begin{equation}\label{23}
  II\le -\gamma_t (c_1 C_1)\cdot \E\|\widehat{q}_{t-1}-q_\tau\|^2_{L^2} + \gamma_t (\frac{c_2^2 C_2^2}{c_1 C_1}+c_1C_1) \frac{\sum_{k=1}^{p}\|f_k\|_{soblev}^2}{J_t^{2s}}.
\end{equation}
 
{\sc Upper bound of Part III.} Next, we consider Part III in \eqref{ui2}. Here, we use the decomposition below. Define 
 $$
 V_i:= \gamma_t  ( \tau - \I(Y_{t,i} \le \thetaVec_{t-1}^\top \PsiVec_t(\X_{t,i})))\PsiVec_t(\X_{t,i})^T\PsiVec_t.
 $$
 Then, it is known 
 $$
 G_t ^T\PsiVec_t= \frac{1}{n_t}\sum_{i=1}^{n_t} V_i.
 $$
 According to above equation, we can rewrite III as follows:
 \begin{align}
   III & = \E\left( \left\|\frac{1}{n_t}\sum_{i=1}^{n_t} V_i\right\|_{L^2}^2 \right) \nonumber\\
   &= \frac{1}{n_t}\cdot \underbrace{\E\left( \frac{1}{n_t}\sum_{i=1}^{n_t} \left\|V_i\right\|_{L^2}^2\right)}_{IV}+ \underbrace{\frac{1}{n_t^2} \E\left( \sum_{i\neq j} \langle V_i,V_j\rangle_{L^2}\right)}_{V}.\label{vs}
 \end{align}
The above equation implies we need bounding IV and V separately. 

Firstly, we consider Part IV in \eqref{vs}. Given $\theta_{t-1}$,  $V_1,\ldots, V_{n_t}$ are conditionally i.i.d.. Therefore,
\begin{align}
  IV & = \E(\left\|V_1\right\|_{L^2}^2) \nonumber\\
  &= \E \left( \|  \gamma_t  ( \tau - \I(Y_{t,1} \le \thetaVec_{t-1}^\top \PsiVec_t(\X_{t,1})))\PsiVec_t(\X_{t,1})^T\PsiVec_t\|_{L^2}^2  \right) \nonumber\\
  &\le  \gamma_t^2 \E \left( \|\PsiVec_t(\X_{t,1})^T\PsiVec_t\|_{L^2}^2 \right) \nonumber\\
  &= \gamma_t^2 \E \left( \int (\PsiVec_t(\X_{t,1})^T\PsiVec_t(\x))^2 d\x\right)  \nonumber\\
  &= \gamma_t^2 \int  \E[(\PsiVec_t(\X_{t,1})^T\PsiVec_t(\x))^2] d\x \nonumber\\
  &\le C_2\gamma_t^2 \int \int(\PsiVec_t(\x_1)^T\PsiVec_t(\x_2))^2d\x_1 d\x_2\nonumber\\
  &\le C_2\gamma_t^2\cdot M^2pJ_t, \label{audh3}
\end{align}
where in the last line we use the orthogonality of basis $\psi_{k,j }$ and $\sup_{k,j}|\psi_{k,j }|\le M$. 

Secondly, we consider Part V in \eqref{vs}.  Let $\mathcal{T}_{t-1}$ be the $\sigma$-algebra generated by $\X_1,Y_1,\ldots,\X_{t-1},$ $Y_{t-1}$. By the conditional independence of $V_i$ and $V_j$, 
\begin{align}
  V &=  \frac{1}{n_t^2} \E\left( \sum_{i\neq j} \langle V_i,V_j\rangle_{L^2}\right)= \frac{1}{n_t^2} \E\E\left( \sum_{i\neq j} \langle V_i,V_j\rangle_{L^2}\mid \mathcal{T}_{t-1}\right) \nonumber\\
  &= \frac{n_t^2-n_t}{n_t^2} \E\E\Big( \int \gamma_t  ( \tau - \I(Y_{t,i} \le \thetaVec_{t-1}^\top \PsiVec_t(\X_{t,i})))\PsiVec_t(\X_{t,i})^T\PsiVec_t(\x)\nonumber\\
  &\quad\quad\quad\quad\quad\cdot \gamma_t  ( \tau - \I(Y_{t,j} \le \thetaVec_{t-1}^\top \PsiVec_t(\X_{t,j})))\PsiVec_t(\X_{t,j})^T\PsiVec_t(\x)d\x\mid \mathcal{T}_{t-1}
   \Big)\nonumber\\
 &= \frac{n_t-1}{n_t}  \E\int\E\Big[ \gamma_t  ( \tau - \I(Y_{t,i} \le \thetaVec_{t-1}^\top \PsiVec_t(\X_{t,i})))\PsiVec_t(\X_{t,i})^T\PsiVec_t(\x)\mid \mathcal{T}_{t-1}\Big]\nonumber\\
  &\quad\quad\quad\quad\quad\cdot \E\Big[\gamma_t  ( \tau - \I(Y_{t,j} \le \thetaVec_{t-1}^\top \PsiVec_t(\X_{t,j})))\PsiVec_t(\X_{t,j})^T\PsiVec_t(\x) \mid \mathcal{T}_{t-1}\Big]d\x
  \nonumber\\
  &=  \frac{n_t-1}{n_t} \E \|\E(V_1\mid \mathcal{T}_{t-1})\|_{L^2}^2. \label{bks}
\end{align}
Thus, the left thing is to bound $\|\E(V_1\mid \mathcal{T}_{t-1})\|_{L^2}^2$. Let $\mathcal{H}_{t}$ be the $\sigma$-algebra generated by $\X_1,Y_1,\ldots,\X_{t-1},$ $Y_{t-1},\X_{t,1}$. According to the telescope property of conditional expectation,
\begin{align}
  \E(V_1\mid \mathcal{T}_{t-1}) & = \E(\gamma_t  ( \tau - \I(Y_{t,1} \le \thetaVec_{t-1}^\top \PsiVec_t(\X_{t,1})))\PsiVec_t(\X_{t,1})^T\PsiVec_t\mid \mathcal{T}_{t-1})\nonumber\\
  &=\E\Big[\E[\gamma_t  ( \tau - \I(Y_{t,1} \le \thetaVec_{t-1}^\top \PsiVec_t(\X_{t,1})))\PsiVec_t(\X_{t,1})^T\PsiVec_t\mid\mathcal{H}_{t}]\mid \mathcal{T}_{t-1}\Big]\nonumber\\
  &=\gamma_t\cdot\E[  ( \tau - F_{Y|\X_{t,1}}(\widehat{q}_{t-1}(\X_{t,1})))\PsiVec_t(\X_{t,1})^T\PsiVec_t\mid \mathcal{T}_{t-1}]\nonumber\\
    &=\gamma_t\cdot\underbrace{\E[  ( \tau - F_{Y|\X_{t,1}}(\widehat{q}_{t-1}(\X_{t,1})))\PsiVec_t(\X_{t,1})^T\mid \mathcal{T}_{t-1}]\cdot\PsiVec_t}_{VI},\nonumber
\end{align}
where the notation $\cdot$ in VI denotes the vector inner production. Now, we focus on Part VI. Given $\mathcal{T}_{t-1}$, construct a real function $\alpha(\x):= (\tau-F_{Y|\X=x}(\widehat{q}_{t-1}(\x))))p_\X(\x)$, which is $L^2$ integrable. Note that $\PsiVec_t$ is a class of orthogonal functions in $L^2[0,1]^d$ and
$$
 VI=\int \alpha(\x)\PsiVec_t(\x)d\x\cdot\PsiVec_t.
$$
The above observation shows that VI is the basis-expansion of $\alpha(\x),\x\in [0,1]^p$. Therefore, the application of  Bessel equality  gives that
\begin{align}
   \| \E(V_1\mid \mathcal{T}_{t-1})\|_{L^2}^2 &\le \gamma_t^2\cdot \|w\|_{L^2}^2\nonumber\\
   &= \gamma_t^2\cdot \|(\tau-F_{Y|\X=x}(\widehat{q}_{t-1}(\x))))p_\X(\x)\|_{L^2}^2 \nonumber\\
    &= \gamma_t^2C_2^2\cdot\|\tau-F_{Y|\X=x}(\widehat{q}_{t-1}(\x)))\|_{L^2}^2\nonumber\\
    &= \gamma_t^2C_2^2\cdot\|F_{Y|\X=x}(q_\tau(\x))-F_{Y|\X=x}(\widehat{q}_{t-1}(\x)))\|_{L^2}^2\nonumber\\
    &\le  \gamma_t^2C_2^2c_2\cdot \|q_\tau-\widehat{q}_{t-1}\|_{L^2}^2, \label{bks2}
\end{align}
where in the third line Assumption \ref{ass:density_X} is applied and in the last line Lagrange mean value theorem  and Assumption \ref{ass:density_Y} are applied. The combination of \eqref{bks} and \eqref{bks2} gives us
\begin{equation}\label{bsajd}
  V\le  \gamma_t^2C_2^2c_2\cdot \E\|q_\tau-\widehat{q}_{t-1}\|_{L^2}^2.
\end{equation}

In conclusion, based on \eqref{vs}, \eqref{audh3} and \eqref{bsajd},  we have
\begin{equation}\label{Yjhdsa2}
  III\le C_2 M^2p\cdot\frac{J_t}{n_t}\gamma_t^2+  \gamma_t^2C_2^2c_2\cdot \E\|q_\tau-\widehat{q}_{t-1}\|_{L^2}^2.
\end{equation}

At this step, combine inequalities \eqref{jklk}, \eqref{ui2}, \eqref{23} and \eqref{Yjhdsa2}. We get the final recursion:
\begin{align}
  \E\|q_\tau-\widehat{q}_{t}\|_{L^2}^2 & \le \left(1-\gamma_t (c_1 C_1)+ \gamma_t^2C_2^2c_2\right)\cdot   \E\|q_\tau-\widehat{q}_{t}\|_{L^2}^2 \nonumber\\
  &\quad\quad+\gamma_t (\frac{c_2^2 C_2^2}{c_1 C_1}+c_1C_1) \frac{\sum_{k=1}^{p}\|f_k\|_{soblev}^2}{J_t^{2s}}+C_2 M^2p\cdot\frac{J_t}{n_t}\gamma_t^2, \label{nS}
\end{align}
which holds for all $t=1,2,3,\ldots$.\\

{\sc Solve the recursion \eqref{nS}.} We first simplify some notations  in \eqref{nS}:
\begin{equation}\label{sh2bja2}
  e_t\le(1-2w_1\gamma_t+w_2\gamma_t^2)e_{t-1}+\gamma_t\frac{w_3}{J_t^{2s}}+w_4\cdot \frac{J_t}{n_t}\gamma_t^2,
\end{equation}
where $e_t:=  \E\|q_\tau-\widehat{q}_{t}\|_{L^2}^2 $ and those coefficients $w_i, i=1,2,3,4$ can be checked in \eqref{nS}. Since $\gamma_t\to 0$, we have $\gamma_t^2=o(\gamma_t)$ and \eqref{sh2bja2} can be further simplified by
\begin{equation}\label{sh2bja3}
  e_t\le(1-w_1\gamma_t)e_{t-1}+\gamma_t\frac{w_3}{J_t^{2s}}+w_4\cdot \frac{J_t}{n_t}\gamma_t^2,
\end{equation}
where $t=t_0,t_0+1,\ldots$ and $t_0$ is a \textbf{fixed integer}. 

\begin{lemma}\label{lemm:recursion}
Suppose for $t \ge t_0$, we have $e_t \le a_t e_{t-1} + b_t$ with $a_t, b_t, e_t > 0$. Then for any $t \ge t_0$,
\[
e_t \le e_{t_0-1} \prod_{k=t_0}^{t} a_k + \sum_{k=t_0}^{t-1} \left( \prod_{j=k+1}^{t} a_j \right) b_k + b_t.
\]
\end{lemma}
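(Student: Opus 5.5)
The plan is to unroll the recursion by induction on $t$, starting at $t=t_0$. For the base case $t=t_0$ the claimed bound reads $e_{t_0}\le e_{t_0-1}a_{t_0}+b_{t_0}$, because the middle sum over $k$ from $t_0$ to $t_0-1$ is empty. This is exactly the hypothesis $e_{t_0}\le a_{t_0}e_{t_0-1}+b_{t_0}$, so nothing more is needed.

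For the inductive step, assume the bound holds at $t-1\ge t_0$, so
\[
e_{t-1}\le e_{t_0-1}\prod_{k=t_0}^{t-1}a_k+\sum_{k=t_0}^{t-2}\Big(\prod_{j=k+1}^{t-1}a_j\Big)b_k+b_{t-1}.
\]
Since $a_t>0$, we may multiply this inequality by $a_t$ without reversing it, and then substitute into $e_t\le a_te_{t-1}+b_t$. The first term becomes $e_{t_0-1}\prod_{k=t_0}^{t}a_k$. Each summand $\big(\prod_{j=k+1}^{t-1}a_j\big)b_k$ becomes $\big(\prod_{j=k+1}^{t}a_j\big)b_k$. The term $a_tb_{t-1}$ is the $k=t-1$ summand, since $\prod_{j=t}^{t}a_j=a_t$. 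Collecting these, the sum runs over $k$ from $t_0$ to $t-1$, and adding the remaining $b_t$ gives exactly the claimed expression at $t$.

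The only point requiring care is the positivity of $a_t$, which is what allows the inequality to be multiplied through. In the application to \eqref{sh2bja3}, $a_t=1-w_1\gamma_t$ is positive once $t_0$ is large enough, since $\gamma_t\to0$. Apart from that, the argument is just bookkeeping of the index ranges (empty sums and the convention $\prod_{j=t}^{t}a_j=a_t$). I do not expect a genuine obstacle in this lemma. The real work comes afterwards, when $a_k=1-w_1A/k$ is plugged in and the products are bounded by $(k/t)^{w_1A}$ to extract the rate.
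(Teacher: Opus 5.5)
Your proposal is correct and is the same argument as the paper's: induction on $t$, with the base case $t=t_0$ being the hypothesis itself (empty middle sum). The inductive step multiplies the previous bound by $a_t>0$ and absorbs $a_t b_{t-1}$ as the $k=t-1$ summand. You also make explicit the use of $a_t>0$ when multiplying through, which the paper leaves implicit.
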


\begin{proof}
This result can be proved by mathematical induction. It can be checked the inequality holds when $t = t_0$. Now, assume this result holds for $t\ge t_0$. Then,
\begin{align*}
e_{t+1} &\le a_{t+1} e_t + b_{t+1} \\
&\le a_{t+1} \left( e_{t_0-1} \prod_{k=t_0}^{t} a_k + \sum_{k=t_0}^{t-1} \left( \prod_{j=k+1}^{t} a_j \right) b_k + b_t \right) + b_{t+1} \\
&= e_{t_0-1} \prod_{k=t_0}^{t+1} a_k + \sum_{k=t_0}^{t} \left( \prod_{j=k+1}^{t+1} a_j \right) b_k + b_{t+1},
\end{align*}
which completes the argument for $t-1$.
\end{proof}

According to \eqref{sh2bja3} and  Lemma \ref{lemm:recursion},  it holds
\begin{equation}\label{dsbnm}
e_t\le e_{t_0-1} \underbrace{ \prod_{k=t_0}^{t} (1-w_1\gamma_k)}_{VII}+ \underbrace{\sum_{k=t_0}^{t-1} \left( \prod_{j=k+1}^{t} (1-w_1\gamma_j)\right) \left(\gamma_k\frac{w_3}{J_k^{2s}}+w_4\cdot \frac{J_k}{n_k}\gamma_k^2\right)}_{VIII} + \underbrace{\gamma_t\frac{w_3}{J_t^{2s}}+w_4\cdot \frac{J_t}{n_t}\gamma_t^2}_{VX}.
\end{equation}

Firstly, we analyze VII. Actually, 
\begin{align*}
  VII & = \prod_{k=t_0}^{t} (1-w_1\gamma_k)=\prod_{k=t_0}^{t} (1-Aw_1\frac{n_t}{N_t})\le \prod_{k=t_0}^{t} (1-\frac{n_t}{N_t})=\frac{N_{t_0-1}}{N_t},
\end{align*}
where we use $N_t=n_t+N_{t-1}$ for each $t=1,2,3,\ldots$ and choose $Aw_1>1$. Secondly, we bound VIII. In this case, 
\begin{equation}\label{sdgfh}
  \gamma_k\frac{w_3}{J_k^{2s}}+w_4\cdot \frac{J_k}{n_k}\gamma_k^2\asymp n_k\cdot N_k^{-1-\frac{2s}{2s+1}}.
\end{equation}
Thus, according to above analysis of VII,
$$
  VIII\le \sum_{k=t_0}^{t-1} N_k^{-1-\frac{2s}{2s+1}} \frac{N_k}{N_{k+1}}\cdots\frac{N_{t-2}}{N_{t-1}} n_k=\frac{1}{N_{t-1}}\sum_{k=t_0}^{t-1} N_k^{-\frac{2s}{2s+1}}n_k.
$$
Note that the real function $v\in\R^+ \to v^{-\frac{2s}{2s+1}}$ is decreasing and $N_t=n_t+N_{t-1}$. We further have
$$
VIII\le \frac{1}{N_{t-1}}\int_{1}^{N_{t-1}} v^{-\frac{2s}{2s+1}}dv\le (2s+1)N_{t-1}^{\frac{1}{2s+1}}N_{t-1}^{-1}=(2s+1)N_{t-1}^{-\frac{2s}{2s+1}}.
$$
Since $n_t/N_t\to 0$, we assume without loss of generality that $n_t/N_t\le \frac{1}{2}$ for all $t\ge t_0$. Therefore, $N_{t-1}/N_t=1-n_t/N_t\ge \frac{1}{2}$ and we finally have
$$
VIII\le (2s+1)2^{\frac{2s}{2s+1}} N_{t}^{-\frac{2s}{2s+1}}.
$$
And by \eqref{sdgfh}, we know Part VX satisfies $VX\le c N_t^{-\frac{2s}{2s+1}}$ for some universal constant $c>0$. In conclusion, by \eqref{dsbnm} and above arguments, it holds
$$
e_t=O\Big(N_t^{-\frac{2s}{2s+1}}\Big),
$$
which completes the proof. \hfill\(\Box\)\\

\bibliographystyle{apalike}

\bibliography{ref}

\end{document}